\newtheorem{theorem}{Theorem}[section]
\newtheorem{proposition}[theorem]{Proposition}
\theoremstyle{definition}
\newtheorem{definition}{Definition}[section]
\newtheorem*{remark}{Remark}
\title{Kullback-Leibler divergence between quantum distributions, and its upper-bound}
\author{Vincenzo Bonnici}
\date{December 2020}
\begin{document}

\maketitle

\begin{abstract}
This work presents an upper-bound to value that the Kullback–Leibler (KL) divergence can reach for a class of probability distributions called quantum distributions (QD). The aim is to find a distribution $U$ which maximizes the KL divergence from a given distribution $P$ under the assumption that $P$ and $U$ have been generated by distributing a given discrete quantity, a quantum. 
Quantum distributions naturally represent a wide range of probability distributions that are used in practical applications. Moreover, such a class of distributions can be obtained as an approximation of any probability distribution.
The retrieving of an upper-bound for the entropic divergence is here shown to be possible under the condition that the compared distributions are quantum distributions over the same quantum value, thus they become {\it comparable}. Thus, entropic divergence acquires a more powerful meaning when it is applied to comparable distributions.
This aspect should be taken into account in future developments of divergences. 
The theoretical findings are used for proposing a notion of normalized KL divergence that is empirically shown to behave differently from already known measures.
\end{abstract}


\section{Introduction}
The Kullback-Leilber divergence (KL), also called entropic divergence, is a widely used measure for comparing two discrete probability distributions \cite{kullback1951information}. 
Such a divergence is derived from the notion of entropy, and it aims at evaluating the amount of information that is gained by switching from one distribution to another. 
The applications of the divergence range in several scientific area, for example, for testing random variables \cite{arizono1989test, li2008testing, belov2010automatic}, for selecting the right sample size \cite{clarke1999asymptotic}, for optimizing sampling in bioinformatics \cite{lin2007information} or for analysing magnetic resonance imagines \cite{volkau2006extraction}.
However, it has two important properties that act as a limitation to its applicability.
It can not be used as a metric because it is not symmetric, in fact, $KL(P||Q) \neq KL(Q||P)$, being $P$ and $Q$ two probability distributions.
Moreover, its value is 0 if equal distributions are compared, but it is shown to not have an upper-bound to its possible value. One of the reasons is because it results in an infinite divergence if the probability of a specific event is equal to 0 in $Q$ but is greater than 0 in $P$. However, even if infinite divergence is discarded, an upper-bound for the entropic divergence has not be established. 

The search for bounded divergences is an important topic in Information Theory and some attempts have been done in the past year.  For example, the main goal of the so-called Jensen-Shannon divergence (JS) \cite{lin1991divergence} is to provide a notion of symmetric divergence, but it is also shown to be upper bounded by the value 1 if the base of the used logarithm is 2.
Its is a metric but its values are not uniformly distributed within the range $[0 \dots 1]$, how it is empirically shown in this study.
Kullback-Leibler and Jensen-Shannon measures are in the class of $f$-divergences \cite{renyi1961measures} which aim at representing the divergence as an average of the odds ratio
 weighted by a function $f$.
Each divergence has specific meaning and behaviours, and the relations among different types of $f$-divergence is a well-studied topic \cite{sason2016f}. 
The Hellinger distance \cite{hellinger1909neue} is one of the most used measures among the $f$-divergences, together with Kl and JS. It avoids infinite divergences by definition and it is bounded between 0 and 1.
\\

The present work introduces a new class of discrete probability distributions, called quantum distributions. The name comes from the fact that the probabilities reported by such class of distributions are formed by quanta of probability.
The final aim of the present study is to show that, given a quantum distribution $P$, there exists another quantum distribution $U$ which maximizes the entropic divergence from $P$. Thus, for each distribution $P$, an upper-bound to the divergence form $P$ can be obtained by constructing $U$. The assumptions are that infinite divergences must be avoidable and that the two distributions must be formed by distributing a given discrete quantity, namely, they must be formed by the same quantum. This last property highlights an important previously unaccounted aspect of the KL divergence. In fact, since such a bound can only be assessed under this condition, KL divergence should only be applied between probability distributions formed by the same quantum.
These theoretical results allow the introduction of a notion of entropic divergence that is normalized in the range $[0 \dots 1]$, independently from the base of the used logarithm. Such a measure is compared with the more commonly used notions of divergence, and distance, between distributions by showing that it behaves in a very specific way. Besides, it is empirically shown that its values are better distributed in the range $[0 \dots 1]$ w.r.t. the compared measures.

\section{Quantum distributions}
A {\it finite} (thus discrete) {multiplicity distribution} is defined as a function $f$ which distributes a given discrete quantity $M$ to a finite set $C$ of $|C|$ distinct cells.
Thus, $\sum_{c \in C}f(c) = M$.
This class of distributions is often represent via Ferrers diagrams \cite{pemmaraju2003computational}, in which the distributed quantity is a finite set of $M$ dots that are assigned to cells.
A multiplicity distribution is commonly transformed into discrete probability (frequency) distributions by converting it to a distribution such that the sum of its outcomes equals 1. Thus, a finite discrete probability distribution $P$ is obtained by diving the assigned quantities for the total quantity, namely $P(c \in C) = \frac{f(c)}{M}$.
In this context, the notion of {\it quantum} relates to the fact that a distribution is defined on a (finite) discrete domain and that the assigned values are formed by quanta, namely discrete unitary pieces of information.

\begin{definition}[Quantum distribution]
A quantum (probability) distribution (QD) is a finite discrete probability distribution which assigns a probability value to each of the $n$ values of a variable. The probability values are positive, non-zero multiples of the fraction $1/M$, called the quantum of the distribution, for a given $M \in \mathbb{N}$.
The value $n$ is also called the cardinality of the distribution.
\end{definition}

It has to be noticed that since quantum distributions are probability distributions, the sum of the assigned probabilities must equal 1.
Moreover, what is defined here is a special type of probability distributions. In fact, in general, it is not required that a probability distribution is sourced by a discrete quantity $M$ distributed over a finite set of cells. 
Such a type of distribution is of great importance in the field of Computer Science, where probabilities are estimated by looking at frequencies calculated from discrete quantities, for example for representing biological information \cite{pinello2011motif, manca2013infobiotics, zambelli2018rnentropy}. 
However, it can be easily shown that the class of quantum distribution covers all the possible discrete finite probability distributions. In fact, for distributions such that assigned probabilities are rational numbers, they can always be re-scaled by setting the quantum value as 1 divided by the common denominator of the assigned probabilities. The rest of discrete finite probability distributions can be approximated by using an arbitrarily small epsilon for their discretization.

\begin{remark}
For each multiplicity distribution $D$ there exists a quantum distribution $P$, and vice versa.
\end{remark}
In fact, given a multiplicity distribution it can always be converted to a quantum distribution by dividing the assigned values to their sum. Vice versa, a quantum distribution can be represented as a function which assigns values that are an integer multiple (a multiplicity) of the quantum. 

Because of the strict relation between frequency/probability and multiplicity distributions, from now to below and without loss of generality, the assigned probability values, $\frac{f(c)}{M}$, will be interchanged with their multiplicity/integer-quantity counterpart, $f(c)$, depending on the purpose of the context in which they are recalled. Similarly, Ferret diagrams and their dot-based representation are used for representing this type of distribution.

\begin{remark}
Two distributions $P$ and $Q$, defined on the same domain $C$, are considered equal, thus not distinct, if $\forall c \in C \Rightarrow P(c) = Q(c)$.
\end{remark}

\begin{proposition}
\label{prop:orderedcount}
The total number of distinct, thus not equal, quantum distributions that can be formed by arranging a quantity $M$ in $n$ distinct cells is ${M - 1}\choose{M - n}$.
\end{proposition}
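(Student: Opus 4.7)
The plan is to reduce the count to a classical composition problem and then apply a stars-and-bars argument. By definition, a quantum distribution on $n$ cells with quantum $1/M$ assigns to cell $c_i$ a value $f(c_i) = m_i/M$ where $m_i \in \mathbb{N}_{\geq 1}$ (the definition forbids zero probabilities), and the probabilities sum to $1$ forces $m_1 + m_2 + \cdots + m_n = M$. Since the cells are labelled/distinct and, by the preceding remark, two distributions are identified iff they agree cell-by-cell, enumerating distinct quantum distributions is exactly enumerating ordered tuples $(m_1,\ldots,m_n)$ of strictly positive integers summing to $M$, i.e.\ the compositions of $M$ into exactly $n$ positive parts.

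Next, I would count these compositions by the standard Ferrers/stars-and-bars encoding already hinted at in the paper's dot representation: lay out the $M$ quanta in a row, producing $M-1$ gaps between consecutive dots; inserting $n-1$ dividers into distinct gaps partitions the dots into $n$ nonempty blocks whose sizes are precisely $(m_1,\ldots,m_n)$. The correspondence between divider-placements and compositions is clearly a bijection, so the count is $\binom{M-1}{n-1}$. Finally, the symmetry of binomial coefficients gives
\[
\binom{M-1}{n-1} \;=\; \binom{M-1}{(M-1)-(n-1)} \;=\; \binom{M-1}{M-n},
\]
which matches the stated formula.

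I do not anticipate a substantive obstacle: the argument is essentially a dictionary translation between the definition of a quantum distribution and a composition of $M$ into $n$ positive parts. The only point that deserves explicit attention is making sure the hypotheses of the definition (positivity of each $f(c_i)$, fixed quantum $1/M$, summation to $1$, distinctness of cells) are used to justify that we are counting \emph{ordered} tuples with the strict inequality $m_i \geq 1$, rather than, say, weak compositions or multisets, since those variants would give $\binom{M+n-1}{n-1}$ or partition numbers instead. Once that distinction is pinned down, the binomial identity closes the proof.
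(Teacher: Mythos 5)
Your proof is correct, and it reaches the formula by a slightly different (though closely related) route than the paper. The paper first peels off the mandatory quantum from each cell, reducing the problem to distributing the remaining $M-n$ quanta freely among $n$ cells, and then cites the closed form $\binom{x+y-1}{y}$ for $y$-combinations with repetition of an $x$-element set, which yields $\binom{(M-n)+n-1}{M-n}=\binom{M-1}{M-n}$ in one step. You instead count compositions of $M$ into $n$ strictly positive parts directly, via the explicit bijection with placements of $n-1$ dividers into the $M-1$ gaps between quanta, obtaining $\binom{M-1}{n-1}$ and then converting by the symmetry of the binomial coefficient. The two arguments are essentially two presentations of the same stars-and-bars idea (the combinations-with-repetition formula the paper quotes is itself standardly proved by your gap argument), but yours is the more self-contained of the two: it exhibits the bijection rather than invoking a quoted identity, and it makes explicit where positivity of each $m_i$ and the labelling of the cells enter. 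Your closing caveat about distinguishing ordered tuples from weak compositions or multisets is well placed, since the paper's next proposition counts precisely the multiset variant (via integer partitions), and that distinction is what separates the two results.
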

\begin{proof}

Given a set $S$ of $x$ elements, the number of $y$-combinations, namely the number of subsets of $y$ distinct elements of $S$, if given by ${x}\choose{y}$.
The number of $y$-combinations with repetitions, namely the number of sequences of $y$ non necessarily distinct elements of $S$, is given by ${x+y-1}\choose{y}$ \cite{feller2008introduction}.

Quantum distributions require that at each cell a minimum value of $1/M$ must be assigned. 
Switching from quantum to multiplicity distributions, it implies that a quantity of $n$ elements, out of $M$, does not participate to the arrangement process, since a fixed minimum value of 1 is assigned to each cell. Thus, the quantity that is arranged equals $M-n$.
Each dot must be assigned to a given cell, and no dot can remain unassigned. Thus, the arrangement process can be seen as an assignment of one specific cell to each of the $M-n$ dots by allowing a cell to be assigned to multiple dots.
Compared to the classical combinatorial problems, we are not assigning dots to cells but, instead, we are assigning cells to dots.
Thus, it equals to count the number of $(M-n)$-combinations with repetitions of a set of $n$ elements, that is given by
\begin{equation}
\binom{M -n +n -1}{M - n} = \binom{M-1}{M-n}
\end{equation}.
\end{proof}

The present study aims at showing that for each of these distributions 
there exists another distribution that maximizes the value of the entropic divergence. The proof of it, that is given in the next section, requires that the elements of the domain must be ordered according to the values assigned to them.

\begin{definition}[Ordered quantum distribution]
\label{def:oqd}
Given a quantum distribution $P$, and ordered quantum distribution (OQD) is obtained by assigning an integer index $i$, with $1 \leq i \leq |C|$, to each domain element $c \in C$ such that $P(c_i) \geq P(c_{i+1})$. $P(c_i)$ is also referred to as $P_i$.
\end{definition}
It has to be noticed that Definition \ref{def:oqd} is based on a monotonically decreasing order but, without loss of generality, a monotonically increasing order can be used too. Besides, in what follows, the greatest value of the distribution is considered to be placed on the leftmost position of it, and, consequently, the lowest value is considered to be placed on the rightmost position of the distribution.

\begin{remark}
\label{def:orderedequivalence}
Two ordered distributions $P$ and $Q$, defined on the same domain $C$, are equal, thus not distinct, if $\forall i : 1 \leq i \leq |C| \Rightarrow P_i = Q_i$.
\end{remark}

Multiple unordered distributions may produce the same ordered distribution leading them to belong to the same class of equivalence that is defined by such a shared ordered output.
Formally, $\mathbb{Q}_{M,n}$ is defined as the complete set of QDs that can be formed by arranging a quantity of $M$ into $n$ cells.
$\mathbb{O}_{M,n}$ is defined as the complete set of OQDs that can be formed by arranging a quantity of $M$ into $n$ cells.
Then, the function which transforms an unordered QD into an ordered QD, $ord : \mathbb{Q}_{M,n} \mapsto \mathbb{O}_{M,n}$, can be shown to be a surjective function.
Thus, each class of equivalence if represented by a given distribution  $O \in \mathbb{O}_{M,n}$, and it is 
formed by a subset of $\mathbb{Q}_{M,n}$, referred to as 
$\mathbb{Q}^O_{M,n}$, such that $\forall P \in \mathbb{Q}^O_{M,n} : ord(P) = O$.

We are interested in counting the number of classes, which also equals the number of distinct ordered distributions.
\begin{proposition}
The total number of distinct ordered distributions that can be formed by arranging a quantity of $M$  in $n$ cells equals the number of partitions of the integer $M$ for representing it as a sum of $n$ integer addends.
\end{proposition}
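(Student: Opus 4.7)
The plan is to exhibit an explicit bijection between $\mathbb{O}_{M,n}$ and the set of partitions of $M$ into exactly $n$ positive parts, and then simply invoke the standard definition of the partition count.

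First I would pass from the probability formulation to the multiplicity formulation (justified by the earlier remark that quantum and multiplicity distributions are in bijection). In this form, an element of $\mathbb{O}_{M,n}$ is a function that assigns to each index $i$ with $1 \leq i \leq n$ a positive integer $m_i = M \cdot P_i$. Three conditions must hold: (i) $m_i \geq 1$ for every $i$, since quantum distributions require strictly positive probabilities and thus each cell receives at least one quantum; (ii) $\sum_{i=1}^n m_i = M$, since the original probabilities sum to $1$; and (iii) $m_1 \geq m_2 \geq \cdots \geq m_n$, which is exactly the monotone ordering imposed by Definition \ref{def:oqd}.

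Next I would observe that a tuple $(m_1, m_2, \ldots, m_n)$ satisfying (i), (ii), and (iii) is, by definition, a partition of the integer $M$ into exactly $n$ positive parts (written in non-increasing order, as is standard). Conversely, any such partition yields a unique element of $\mathbb{O}_{M,n}$ by setting $P_i = m_i / M$. This correspondence is clearly a bijection: different partitions give rise to different ordered sequences of probabilities, and the equality criterion of Remark \ref{def:orderedequivalence} coincides exactly with equality of the underlying partitions.

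The counting statement then follows immediately, since the two sets are in bijection and the right-hand side is by definition the number of partitions of $M$ into $n$ parts. I expect no genuine obstacle here; the only point requiring a brief comment is condition (i), namely that we count partitions with exactly $n$ positive parts rather than partitions with at most $n$ parts. This is forced by the fact that the cardinality of an OQD in $\mathbb{O}_{M,n}$ is fixed to $n$ and every cell must carry a non-zero probability (at least one quantum), so no part may vanish.
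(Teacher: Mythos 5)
Your proof is correct and follows essentially the same route as the paper: both rest on the observation that, in multiplicity form, an ordered quantum distribution with a mandatory quantum in every cell is precisely a non-increasing tuple of positive integers summing to $M$, i.e.\ a partition of $M$ into exactly $n$ non-zero parts. You make the bijection explicit and verify injectivity/surjectivity, whereas the paper merely asserts the identification and then quotes the recursive formula $p_y(x) = p_y(x-y) + p_{y-1}(x-1)$ for computing the partition count --- a computational aside that is not actually needed to establish the stated equality, so your version is, if anything, the tighter argument.
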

\begin{proof}
Similarly to unordered distributions (see Proposition \ref{prop:orderedcount}), ordered distributions assign a minimum quantity of $1$ to each cell. 
The number of partitions of an integer $x$ for representing it as a sum of $y$ addends, which value can not be 0, can be obtained by the recursive formula $p_y(x) = p_y(x - y) + p_{y-1}(x - 1)$, with $p_y(x) = 0$ if $y>x$ and $p_y(0) = 0$ \cite{stanley2011enumerative}.
Thus, we can use the formula to evaluate the number of ordered distributions by setting $x = M$, because it is the total arranged quantity, and $y = n$, because we want to represent such an integer as a sum of exactly $n$ non-zero addends (namely, non-empty cells).
\end{proof}

The search for a maximum value of the KL divergence between two quantum distributions $P$ and $Q$ (presented in the next section) is based on the fact that both distributions must be formed by the same quantum value. However, there are plenty of practical situations where this assumption is not verified, and the two distributions need to be transformed into two {\it comparable} distributions before the calculation of the divergence.
\begin{proposition}
Given two quantum distributions, $P$ and $Q$, formed by two different quanta, $1/M_P$ and $1/M_Q$ respectively, they can always be transformed into two distributions formed by the same quantum.
\end{proposition}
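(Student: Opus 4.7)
The plan is to construct a common quantum by taking a common multiple of $M_P$ and $M_Q$. I would set $M := \operatorname{lcm}(M_P, M_Q)$ (or, more economically unnecessary but simpler, $M := M_P \cdot M_Q$), and observe that $k_P := M/M_P$ and $k_Q := M/M_Q$ are both positive integers. The key insight is that refining the quantum is a purely representational operation: the probabilities themselves are unchanged, only the integer multiplicities used to express them are scaled.

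Concretely, using the multiplicity viewpoint justified by the earlier remark relating QDs to multiplicity distributions, I would let $f_P$ and $f_Q$ denote the multiplicity functions underlying $P$ and $Q$, so that $P(c) = f_P(c)/M_P$ and $Q(c) = f_Q(c)/M_Q$ with $f_P(c), f_Q(c) \in \mathbb{N}_{>0}$. Then I would define new multiplicity functions $f_{P'}(c) := f_P(c) \cdot k_P$ and $f_{Q'}(c) := f_Q(c) \cdot k_Q$. The resulting distributions $P'$ and $Q'$ have quantum $1/M$ by construction.

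The verification amounts to three short checks: (i) $f_{P'}(c)$ and $f_{Q'}(c)$ are positive integers, since they are products of positive integers; (ii) the assigned probabilities are preserved, because
\begin{equation}
\frac{f_{P'}(c)}{M} = \frac{f_P(c)\cdot(M/M_P)}{M} = \frac{f_P(c)}{M_P} = P(c),
\end{equation}
and analogously for $Q'$; and (iii) the multiplicities sum correctly, since $\sum_c f_{P'}(c) = k_P \sum_c f_P(c) = k_P \cdot M_P = M$, and likewise for $Q'$. Hence $P'$ and $Q'$ are valid QDs over the same quantum $1/M$.

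There is no real obstacle here; the only thing worth emphasizing in the write-up is that this rescaling changes neither the probability values nor the domain, so $P'$ and $Q'$ are indistinguishable from $P$ and $Q$ as probability distributions and can be freely substituted whenever a common quantum is required. This is the precise sense in which two QDs with distinct quanta can always be made \emph{comparable}.
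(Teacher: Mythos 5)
Your proposal is correct and follows essentially the same route as the paper: rescale both distributions to the common quantum $1/M$ with $M = \operatorname{lcm}(M_P, M_Q)$, multiplying the underlying multiplicities by the integer factors $M/M_P$ and $M/M_Q$. Your write-up is in fact somewhat more careful than the paper's, which leaves the verification that the rescaled distributions remain quantum distributions as ``trivial to show,'' whereas you spell out the three checks explicitly.
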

\begin{proof}
Since $M_P$ and $M_Q$ are two positive integer numbers, the least common multiple (lcm) between them can be used for re-scaling the two distributions such that they will be formed by the same quantum value. 
The new distributions are formed by the same quantum that is $M = 1/lcm(M_P,M_Q)$.
The values of these distributions are always in the form $x/M$, being $x$ a positive integer, thus the values of the new distributions can be re-scaled as $  x(M/M_Q) / M$.
It is trivial to show that the new distributions maintain their status of quantum distribution.
\end{proof}

\section{Upper-bound of the entropic divergence}

Given two probability distributions the entropic divergence, also called the Kullback–Leibler (KL) divergence from the authors who discovered it \cite{kullback1951information}, aims at measuring the information gain from one distribution to another. For two probability distributions, $P$ and $Q$, that are defined on the same domain $C$, the divergence of $P$ from $Q$ is defined as:
\begin{equation}
\begin{aligned}
KL(P||Q) = \sum_{c \in C}p(c) log_2 \frac{P(c)}{Q(c)}
\end{aligned}
\end{equation}.
The divergence is not symmetric, thus $KL(P||Q) \neq KL(Q||P)$, and its possible value ranges between $0$ and $+ \infty$. In fact, the divergence is $0$ if the two distributions equal in their outcomes, namely $P(c)=Q(c), \forall c \in C$. It has no upper-bound, as it has been shown by the Gibbs' inequality \cite{bremaud2012introduction}. However, such an affirmation has been shown by comparing two {\it general} distributions and by stating that the entropic divergence is a difference between the two quantities $-\sum_{c \in C}P(c)log_2 P(c)$ and $-\sum_{c \in C}P(c)log_2 Q(c)$, which implies
\begin{equation}
\begin{aligned}
-\sum_{c \in C}P(c)log_2 P(c) \leq -\sum_{c \in C}P(c)log_2 Q(c)
\end{aligned}
\end{equation},
and thus 
\begin{equation}
\begin{aligned}
KL(P||Q) = \sum_{c \in C}P(c) log_2 \frac{P(c)}{Q(c)} \geq 0
\end{aligned}
\end{equation}.

Given two positive numbers $M$ and $n$, the previous section shows that the sets $\mathbb{Q}_{M,n}$ and $\mathbb{O}_{M,n}$ are finite sets. Thus, it is trivial to show that, for each $P$ belonging to one of the two sets, there exists a distribution $U$ in $\mathbb{Q}_{M,n}$ or $\mathbb{O}_{M,n}$ for which $KL(P||U)$ is max.
Here, we are interested in finding such a distribution $U$.
It has to be pointed out that $P$ and $U$ are quantum distributions formed by the same quantum.
This assumption is of crucial importance to obtain, in practical situations, an upper bound to the divergence from a given distribution $P$.
\\

The general concept of distribution, and thus of probability distribution, is independent of a given ordering of the elements in $C$. 
In this perspective, ordered quantum distributions are used without loss of generality.
The KL formula for ordered distributions can be written as:
\begin{equation}
\begin{aligned}
KL(P||Q) = \sum_{1 \leq i \leq n}P_ilog_2 \frac{P_i}{Q_i}
\end{aligned}
\end{equation}.
It has to be pointed out that the ordering does not affect, in any way, the value of the KL divergence. 
This means that the distribution which maximizes the KL value from a given distribution $P$ also maximizes the KL for all the unordered distributions within the same class of equivalence of $P$ defined in the previous section.
Thus, the goal is to define the shape of the distribution $U$ which maximized the entropic divergence to $P$.

In order to avoid infinite divergences, it is required that the compared distributions, $P$ and $U$, must be defined on the set set $C$ and that for each element the two distributions are non-zero valued, namely $P_i > 0$ and $U_i > 0$ for $1 \leq i \leq n$. 
This constraint, together with the discretization of the quantity that is distributed to the cells, implies that at each cell at least a quantity equal to 1 is assigned, that is $P_i, U_i \geq 1/M$ for every $i$.
Thus, for constructing the distribution $U$, the quantity that must be arranged is $M - n$.


The entropic divergence is a sum of terms in the form $P_i log_2( P_i / U_i )$. If $P_i < U_i$ then a negative contribution is given to the sum because of the logarithmic function, while positive contributions are given for $P_i \geq U_i$. Thus, the aim is to reduce the number of positions with negative contributions. 
Each term is mediated by the $P_i$ factor, thus, it is preferable to assign positive contributions to the greatest $P_i$ values. On the contrary, negative contributions should be assigned to the smallest $P_i$ values. This means that, if $P$ is monotonically decreasing ordered (from left to right), then positive contributions should be on the left side of the distributions, and negative terms should be on the right side. 
Furthermore, the greater is $P_i$ w.r.t. $U_i$, the higher is the value of the divergence. This translates to try to increase as much as possible the difference between the greatest $P_i$ values and their corresponding $U_i$ counterparts. Of course, reducing the quantity that is assigned to the initial positions of $U$ results in increasing the quantity that is assigned to the right positions of it. 

All of these considerations lead to the intuition that the distribution that maximizes the entropic divergence is the one that minimizes the quantity assigned to positions from 1 to $n-1$, and that assigns all the remaining amount to the last position. Since the minim amount of quantity is equal to 1, then such a distribution assigns the remaining $M-n+1$ quantity to the last position $n$.
In what follows, it is shown that if $P$ is monotonically decreasing ordered then such a distributional shape maximizes the entropic divergence independently from how the quantity is distributed in $P$. This fact also implies that such maximization is independent of the ordering of $P$. It is only necessary that the quantity $M-n+1$  is assigned to the position $i$, rather than $n$, where $P_i$ is minimal. However, the ordering is helpful to prove the initial statement.

From here on, the maximizing distribution is always referred to as $U$ and any other competitor distribution is referred to as $Q$.
The proof that the entropic divergence from $U$ to $P$ is greater than the divergence from any other distribution $Q$ is split into two parts. 
Firstly, a special case is addressed, then the proof of the general case is given
\\

The special case is presented in Figure \ref{fig:case1}.
A total amount of $M=11$ elements are arranged into $n=5$ cells to compose the distributions.
As introduced above, the $P$ distribution has a monotonically decreasing order and the $U$ distributions assign a quantity of $M-n+1$ to the last cell. 
The special case is represented by the $Q$ distribution which assigns a quantity of 2 to the $(n-1)$-th position and a quantity of $M-n$ to the last position. 
For all the distributions, for every cell, a minimal quantity of 1 is assigned.
The goal is to show that:
\begin{equation}
KL(P||U) > KL(P||Q)
\end{equation}.

\begin{figure}[h]
\centering
\label{fig:case1}
\includegraphics[width=6cm]{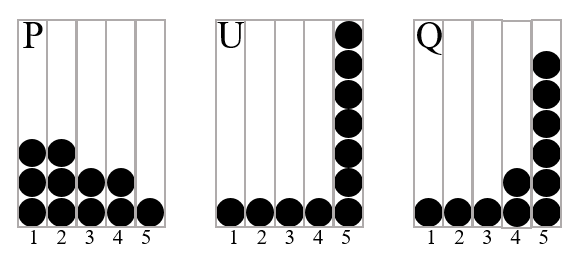}
\caption{First special case. Each element is represented as a dot that is assigned to one of the cells. A total of 11 elements are assigned to a total of 5 cells, for each of the three distributions, $P$, $U$ and $Q$ that are present in the case.}
\end{figure}

From cell 1 to cell 3, the two divergences have equal contribution, thus they differ by the last two terms. Thus, the inequality can be written as:
\begin{equation}
\begin{aligned}
P_4 log_2 \frac{P_4}{U_4} + P_5 log_2 \frac{P_5}{U_5}
>
P_4 log_2 \frac{P_4}{Q_4} + P_5 log_2 \frac{P_5}{Q_5}
\end{aligned}
\end{equation}
, that is
\begin{equation}
\begin{aligned}
P_4 log_2 P_4 - P_4 log_2 U_4 + P_5 log_2 P_5 -  P_5 log_2 U_5 \\
> \\
P_4 log_2 P_4 - P_4 log_2 Q_4 + P_5 log_2 P_5 -  P_5 log_2 Q_5 
\end{aligned}
\end{equation}
, that is
\begin{equation}
\begin{aligned}
- P_4 log_2 U_4 -  P_5 log_2 U_5 >  - P_4 log_2 Q_4 -  P_5 log_2 Q_5 \\
- \frac{2}{11} log_2 \frac{1}{11} - \frac{1}{11} log_2 \frac{5}{11} 
>
- \frac{2}{11} log_2 \frac{2}{11} - \frac{1}{11} log_2 \frac{4}{11} \\
- \frac{1}{11} log_2 5
>
- \frac{2}{11} log_2 2 - \frac{1}{11} log_2 4\\
- log_2 5 > - 2 log_2 2 - log_2 4\\
- log_2 5 > - 4
\end{aligned}
\end{equation}
, that is true.

A general proof of this special case, independently from the values of $M$ and $n$, is given in the Supplementary Materials (Prop. 2.1).
\\

Moving forward, the final goal is to show that $U$ is maximizing the divergence w.r.t any possible distribution $Q$ that is obtained by arranging the $M-n$ quantity to all the cells. 

\begin{proposition}
Let $P$ to be a OQD obtained by distributing a quantity $M$ to $n$ cells.
Let $U$ to be a QD which assigns all the free quantity $M-n$ to the $n$-th cell, in addition to the minimum quantity of $1$ to each cell.
Let $Q$ to be a QD which assigns the free quantity in a way different from $U$, in addition to the minimum quantity of $1$ to each cell.
Then, $KL(P||U) > KL(P||Q)$, independently on how the quantity is arranged in $Q$.
\end{proposition}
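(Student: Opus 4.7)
The plan is to set up a swap/exchange argument that drives any competitor $Q$ toward $U$ via a sequence of elementary moves, each of which weakly increases the KL divergence from $P$. First, I would rewrite
\[
KL(P||Q) = \sum_{i=1}^n P_i \log_2 P_i - \sum_{i=1}^n P_i \log_2 Q_i,
\]
so that, with $P$ fixed, maximising $KL(P||\cdot)$ over the finite set of quantum distributions is equivalent to minimising $\sum_i P_i \log_2 q_i$ over integer multiplicity vectors $(q_1,\ldots,q_n)$ with $q_i \geq 1$ and $\sum_i q_i = M$. The proposition amounts to asserting that this minimiser is $u = (1,1,\ldots,1,M-n+1)$.

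The central tool will be a single \emph{excess-collapse} move. Given $Q$ with $q_j \geq 2$ for some $j < n$, define $Q'$ by resetting $q'_j := 1$ and $q'_n := q_n + (q_j - 1)$, leaving the other coordinates untouched. A direct computation yields
\[
KL(P||Q') - KL(P||Q) = P_j \log_2 q_j \;-\; P_n \log_2 \frac{q_n + q_j - 1}{q_n}.
\]
To show this is non-negative I would separate the probability factors from the arithmetic. On the probability side, OQD delivers $P_j \geq P_n$ because $j < n$. On the arithmetic side, the inequality $\log_2 q_j \geq \log_2((q_n + q_j - 1)/q_n)$ rearranges to $q_j q_n \geq q_n + q_j - 1$, i.e.\ $(q_j - 1)(q_n - 1) \geq 0$, which is automatic for positive integers. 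Combining the two halves gives $KL(P||Q') \geq KL(P||Q)$.

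The proposition then follows by iteration. Any $Q \neq U$ has $q_j \geq 2$ at some $j < n$, so repeatedly applying the collapse move — for instance, processing positions $1, 2, \ldots, n-1$ in order and resetting each to multiplicity $1$ — produces a chain $Q = Q^{(0)}, Q^{(1)}, \ldots, Q^{(T)} = U$ along which $KL(P||\cdot)$ is non-decreasing. After all $n-1$ positions have been processed, the first $n-1$ coordinates equal $1$ and conservation of mass forces the last coordinate to be exactly $M-n+1$, giving $KL(P||Q) \leq KL(P||U)$.

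I expect the main obstacle to be the strict inequality stated in the proposition. The arithmetic factor $(q_j - 1)(q_n - 1)$ vanishes precisely when $q_n = 1$, in which case strictness of a single collapse move depends on $P_j > P_n$ rather than only $P_j \geq P_n$. A workable fix is to order the iteration so that the very first move raises $q_n$ to at least $2$, after which every subsequent step is strictly improving; the first step then becomes the only one requiring case analysis. Its degenerate sub-case $P_j = P_n$ with $q_n = 1$ corresponds exactly to $P$ being constant on a tail segment of the domain, where the placement of the excess mass is irrelevant and the two divergences are genuinely equal. Isolating and bookkeeping these boundary situations is the only subtle part of the argument.
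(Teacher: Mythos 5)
Your argument is correct in substance but follows a genuinely different route from the paper. The paper proceeds globally: it expands both divergences, cancels the common terms, and reduces the claim to the single inequality $\sum_{i=1}^{n} P_i \log_2 (x_i+1) > P_n \log_2 (M-n+1)$ with $\sum_i x_i = M-n$, which it then settles by bounding each $P_i$ below by $P_n$ and invoking the superadditivity $\prod_i (x_i+1) \geq 1+\sum_i x_i$ (phrased there, rather loosely, as ``the sum of logarithms is greater than the logarithm of the sum''). You instead run a local exchange argument: collapse the excess of one coordinate $j<n$ onto coordinate $n$, verify via $(q_j-1)(q_n-1)\geq 0$ together with $P_j \geq P_n$ that each such move cannot decrease $KL(P\|\cdot)$, and iterate until $U$ is reached. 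The two proofs rest on the same two ingredients --- the ordering of $P$ and the elementary inequality $ab \geq a+b-1$ for $a,b\geq 1$, of which the paper's product bound is just the iterated form --- but your version is more transparent and, importantly, more careful about strictness. The paper's closing manipulations are garbled (the displayed bound with the factor $nP_n$ and the side condition $\sum_i(x_i+1)=M-n$ does not follow from the preceding line) and the equality cases are never isolated, whereas you correctly observe that equality is genuinely attained when $Q$ places all the free quantity on a single cell $j<n$ with $P_j = P_n$ (e.g.\ $P$ uniform): there $KL(P\|Q)=KL(P\|U)$, so the strict inequality claimed in the proposition actually fails and the statement should be weakened to $\geq$ or those degenerate configurations excluded. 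Your flagging of this boundary case is a point in favour of your approach rather than a gap in it.
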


\begin{proof}

The initial $n-1$ cells of $U$ have a value equal to $1/M$, and the last cell has a value equal to $\frac{M-n+1}{M}$.
Instead, for what concerns $Q$, a quantity equal to $1+x_i$, for $x_i \geq 0$ , is assigned to each cell, such that $\sum_{1 \leq i \leq n}x_i = M-n$.

The following inequality must be verified:
\begin{equation}
\label{eq:case3:1}
\begin{aligned}
\Bigg(\sum_{1\leq i \leq n-1}P_i log_2 \frac{P_i}{ \frac{1}{M}} \Bigg)+ P_n log_2 \frac{ P_n }{  \frac{ M-n+1 }{ M } }
>
\sum_{1\leq i \leq n} P_i log_2 \frac{P_i}{ \frac{1+x_i}{M} } 
\end{aligned}
\end{equation},
that is
\begin{equation}
\begin{aligned}
\Bigg(\sum_{1\leq i \leq n-1}P_i log_2 \frac{P_i}{ \frac{1}{M}} \Bigg)+ P_n log_2 \frac{ P_n }{  \frac{ M-n+1 }{ M } }\\
>\\
\Bigg(\sum_{1\leq i \leq n-1} P_i log_2 \frac{P_i}{ \frac{1+x_i}{M}} \Bigg)+ P_n log_2 \frac{P_n}{ \frac{1+x_n}{M} }
\end{aligned}
\end{equation}.
The left side of the inequality is composed of a series of terms $P_i log_2 \frac{P_i}{ \frac{1}{M}}$ each of which equals $P_i log_2 P_i - P_i log_2 1 + P_i log_2 M$, and the entire inequality can be written as
\begin{equation}
\begin{aligned}
 \sum_{1\leq i \leq n-1} \Bigg(  P_i log_2 P_i - P_i log_2 1 + P_i log_2 M \Bigg)\\
 +P_n log_2 P_n  -P_n log_2 (M-n+1) +P_n log_2 M\\
>\\
 \sum_{1\leq i \leq n-1}  \Bigg( P_i log_2 P_i - P_i log_2 (x_i+1) + P_i log_2 M \Bigg)\\
 +P_n log_2 P_n  -P_n log_2 (x_n+1) +P_n log_2 M
\end{aligned}
\end{equation},
that is
\begin{equation}
\begin{aligned}
 \sum_{1\leq i \leq n-1} \Bigg(  P_i log_2 1\Bigg) - P_n log_2 (M-n+1) \\
>\\
 \sum_{1\leq i \leq n-1}  \Bigg( - P_i log_2 (x_i+1) \Bigg) - P_n log_2 (x_n+1)
\end{aligned}
\end{equation},
that is
\begin{equation}
\begin{aligned}
 -P_n log_2 (M-n+1)
>
 - \sum_{1\leq i \leq n}  P_i log_2 (x_i+1)
\end{aligned}
\end{equation}.

Since $P$ is ordered, for each position $i$ it happens that $P_i = P_{i-1} - \epsilon_i$, namely $P_{i-1} = P_i + \epsilon_i$.
The inequality can be written as
\begin{equation}
\begin{aligned}
P_n log_2 (M-n+1)
<\\
(P_n) log_2 (x_n + 1) +\\
(P_n + \epsilon_n) log_2 (x_{n-1} + 1) +\\
(P_n + \epsilon_n + \epsilon_{n-1}) log_2 (x_{n-2} + 1) +\\
\dots\\
(P_n + \epsilon_n + \dots + \epsilon_{n-n+2}) log_2 (x_{n-n+1} + 1)
\end{aligned}
\end{equation}.
The arguments of the logarithms are always greater than 1, thus the values of the logarithms are always positive.
Moreover, the factors that multiply the logarithms are always positive, because they are probabilities.
The inequality can be written as
\begin{equation}
\begin{aligned}
P_n log_2 (M-n+1)
<
n P_n \Bigg( \sum_{1 \leq i \le n} log_2(x_i + 1) \Bigg) + c
\end{aligned}
\end{equation}, 
with $c \geq 0$ and $\sum_{1 \leq i \le n} (x_i + 1) = M - n$.
By taking into account that the sum of logarithms is greater than the logarithm of the sum \cite{cover1991elements}, it is now trivial to show that the inequality is always satisfied.
\end{proof}

The previous proof is given for an ordered distribution $P$. However, the final inequality is independent from the ordering. In fact, it puts in relation the quantity $M-n+1$ (that is the one that makes $U$ the distribution of interest) with the sum of the $x_i+1$ terms independently on their position and specific value.
$P$ is ordered, and  $U$ assigns by construction the additional $M-n$ quantity to the cell where $P$ has the lowest assigned value.

The retrieving of an upper-bound for the entropic divergence is here shown to be possible under to main conditions: (i) no zero values are assigned by the two distributions; (ii) the compared distributions are quantum distributions over the same quantum value  $1/M$.
The first conditions is often ensured in practical applications, where pseudo-counts are used for avoiding infinite divergences.
The second condition emerges from this study. It states that the entropic divergence acquires a more powerful meaning when it is applied to {\it comparable} distributions. The term {\it comparable} is referred to the sharing of the same quantum value. 
This aspect should be taken into account in future developments of divergences.

\section{A notion of normalized entropic divergence}

The retrieving of the maximizing distribution is exploited in order to normalize the entropic divergence in the range $[0,1]$, both included. Given two distributions $P$ and $Q$, the normalized entropic divergence is calculated as 
\begin{equation}
\begin{aligned}
KN(P||Q) = \frac{KL(P||Q)}{KL(P||U)}
\end{aligned}
\end{equation},
where $U$ is the distribution for which the maximum entropic divergence from $P$ is obtained. 
Such a maximizing distribution is built by exploiting the results obtained in the previous section.
Namely, it distributes a minimum value of 1 to each cell, and the remaining quantity $M-n$ is assigned to the cell for which the value in $P$ is the minimum.
\\

In what follows, the proposed normalized entropic divergence is compared with the most used notions of entropic divergence, plus a measure that is highly suitable for comparing multiplicity distributions. The comparison is performed by looking at three different aspects: (i) the difference in the values that the measures output on comparing two distributions (see Section \ref{sec:comp2by2}); (ii) the spread of output values within the output range (see Section \ref{sec:odiversity}; (iii) the diversity of the measures in assigning a rank (see Section \ref{sec:diffrank}).
The relation between the measures and the properties of the compared distributions is investigated, too (see Section \ref{sec:reldistrprop}).

The investigations are empirically conducted by computationally generating the distributions.
The source code for generating the unordered and ordered distributions, together with the computational experiments, is available at the following link \url{https://github.com/vbonnici/KL-maxima}.

\subsection{Compared measures}
The proposed divergence is compare with the unnormalized one, namely $KL(P||Q)$, and with the common used symmetric divergence, also called Jensen–Shannon divergence (JS). The JS divergence is defined as 
\begin{equation}
\begin{aligned}
JSD(P,Q) = \frac{KL(P||A) + KL(Q||A)}{2}
\end{aligned}
\end{equation}
, with $A = \frac{P+Q}{2}$, and it is known to be upper-bounded by $1$ if the base of the logarithm is 2 \cite{lin1991divergence}.

Another important divergence is the Hellinger distance that is defined as 
\begin{equation}
\begin{aligned}
HE(P,Q) = \frac{1}{\sqrt{2}} \sqrt{ \sum_{ 1 \leq i \leq n} (\sqrt{P_i} - \sqrt{Q_i})^2 }
\end{aligned}
\end{equation},
and it can also be written as $HE^2(P,Q) = 1 - \sum_{1 \leq i \leq n} \sqrt{P_i Q_i}$. Important properties of such a divergence are that it implicitly avoids infinite divergences and it is bounded in the range $[0 \dots 1]$.


The generalized Jaccard similarity is a measure suitable for comparing multiplicity distributions.
It if defined as:
\begin{equation}
\label{eq:genjaccard}
\begin{aligned}
J(P,Q) = \frac{  \sum_{1 \leq i \leq n} min(P_i, Q_i)  }{ \sum_{1 \leq i \leq n} max(P_i, Q_i)  }
\end{aligned}
\end{equation}.
It can be shown that such a measure ranges from 0 to 1, both included. The minimum value is reached when the two distributions have no multiplicity in common, which means that $P_i = 0$ when $Q_i \neq 0$ and vice versa. 
It reaches the maximum value when the two distributions have equal values. 
It is a notion of similarity, therefore, it is in contrast with the meaning of the entropic divergence. Thus, for this study, it is converted as $\widetilde{J}(P,Q) = 1 - J(P,Q)$ to have it as a notion of distance.

The generalized Jaccard distance is directly applied to multiplicity distributions, while entropic divergences are applied after converting the distributions into probability/frequency distributions.

\subsection{Direct comparison of output values}
\label{sec:comp2by2}
Unordered distributions are built by using the computational procedure, then two-by-two comparisons are performed.
A scatter plot is made by using the two measures, for example, the generalized Jaccard distance and the normalized entropic divergence, for locating each two-by-two comparison.
The chart is also equipped with two histograms located aside of the axes that report the number of instances that falls within a given range of values.

\begin{figure}[h]
\centering
\includegraphics[width=0.9\linewidth]{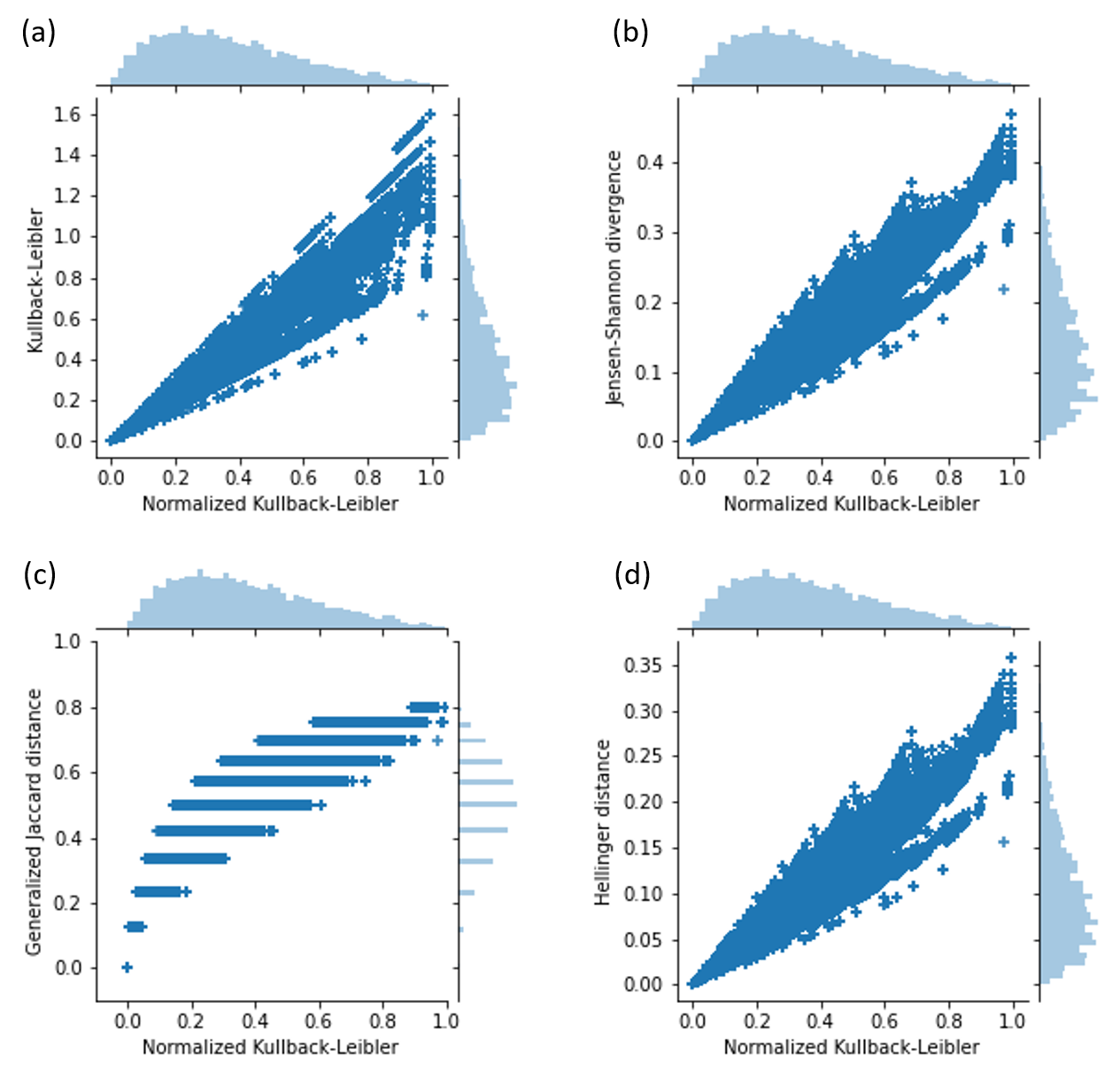}
\caption{Relation between the proposed normalized Kullback-Leibler divergence and (a) unnormalized Kullback-Leibler divergence; (b) symmetric Kullback-Leibler divergence; (c) generalized Jaccard distance; (d) Hellinger distance.}
\label{fig:exp1}
\end{figure}

Figure \ref{fig:exp1} reports the relations between the proposed normalized divergence and the other investigated measures. Calculations were performed by setting a number of cells equal to 5 and a total distributed quantity of 15. The experiment generated 1001 unordered distributions, of which 30 were monotonically decreasing ordered. Thus, a total of $1001 \time 1001$ two-by-two distribution comparisons were performed.

The proposed measure is more correlated with the non-symmetric divergence, rather than the other measures. Pearson correlation coefficient \cite{lee1988thirteen} reaches a value of 0.97 between the proposed divergence and the unnormalized one, and a correlation value of 0.96 between the proposed measure and the symmetric divergence.
The complete list of Pearson correlation coefficients between the compared measures is reported in Table 1 of Supplementary Material.

\section{Relation with distributional properties}
\label{sec:reldistrprop}
Entropic divergences, as well as other measures, can be used for prioritizing elements w.r.t. their deviance form randomness or, generically, from a background model. Thus, it can be interesting to study how the rank assigned to elements, based on their divergence, changes when the four different measures are used.
In what follows, the uniform distribution is used as background model and the measure of divergence from it is calculated for the set of ordered distributions that can be formed by taking into account the same quantity that is distributed in the uniform shape. For the experiments, a number of cells equal to 8 and a total quantity of 32 have been taken into account. In this way, the uniform distribution assigns a quantity of 4 to each cell. The difference w.r.t. the previous experiments, where 5 cells and 15 elements are considered, is because the previous experiment generates only 30 distinct ordered distributions which is a relatively small number. On the contrary, a setup with 8 cells and 32 elements generates a high number of unordered distributions (2,629,575) that leads to a huge number of two-by-two comparisons. As a pro, the new setup generates 919 ordered distributions, that can be considered a sufficient amount to draw experimental conclusions.

The correlation between the measures and the properties of the compared distributions is investigated. Entropy, coefficient of variation, skewness and Kurtosis's index are the considered properties.

Figure \ref{fig:exp2} shows the relation between the four investigated measures and the entropy of the ordered distribution that is compared with the uniform distribution. The simple Kullback-Leibler divergence is the measure which better correlates with the entropy, followed by the proposed normalized divergence. Table 2 of Supplementary Material reports the correlations between the measures and the entropy. The numeric correlations confirm what is shown by the graphics.

\begin{figure}[h]
\centering
\includegraphics[width=0.9\linewidth]{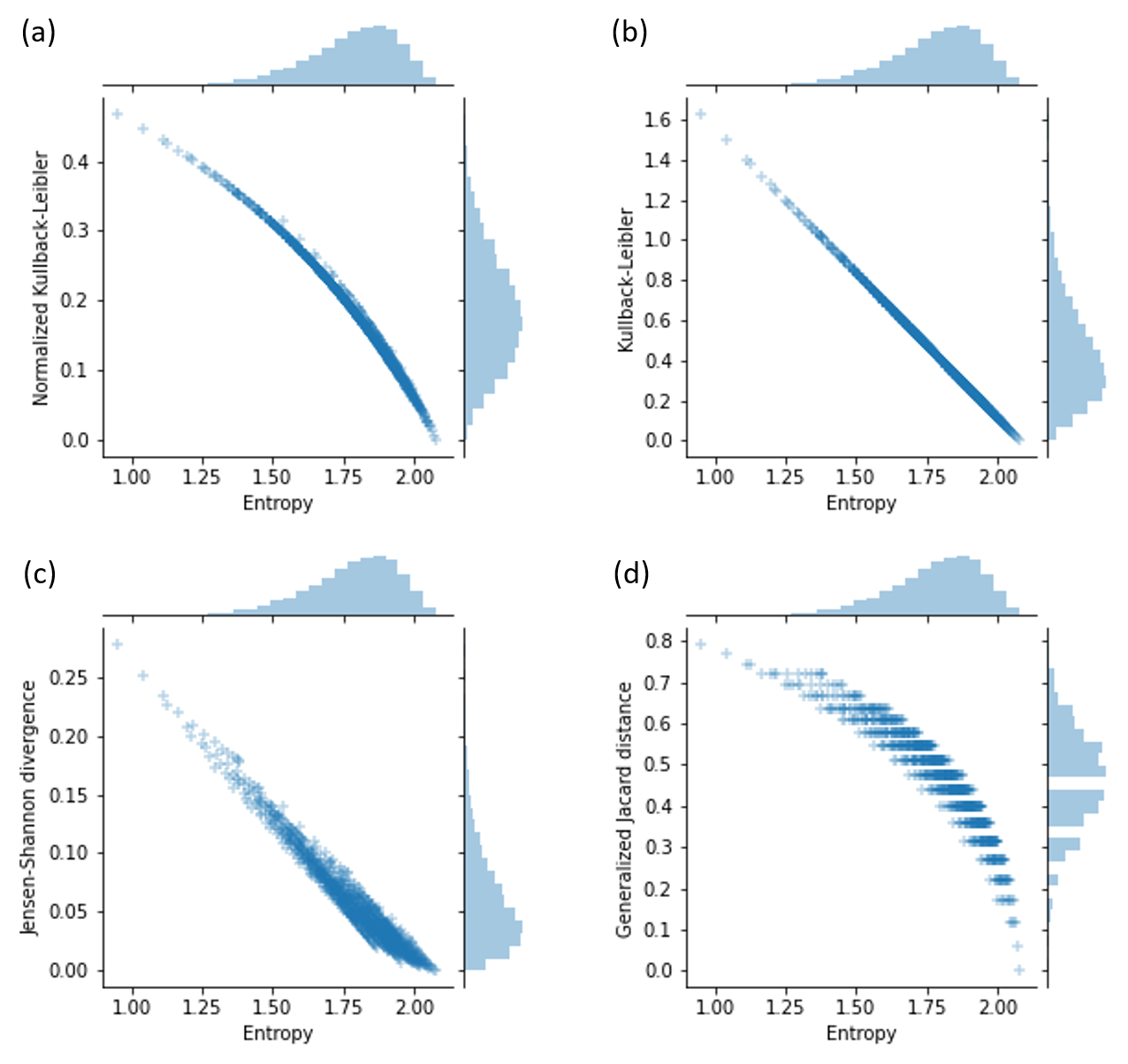}
\caption{Scatter plots generated by putting in relation four of the investigated measures and the entropy of the set of monotonically ordered distributions, generated with 8 cells and 32 dots, and the corresponding uniform distribution.}
\label{fig:exp2}
\end{figure}

Figure \ref{fig:exp3} shows the relation between the four measures and the coefficient of variation of the ordered distribution that is compared with the uniform one. Pearson correlation coefficients are reported in Table 2 of Supplementary Material. Differently from entropy-related correlations, the proposed normalized measure is the one which better correlates with the coefficient of variation, followed by the unnormalized entropic divergence. Moreover, differently from the unnormalized Kullbac-Leibler divergence and the Jensen-Shannon divergence, the proposed normalized divergence forms a sigmoid curve rather than an exponential trend.

\begin{figure}[h]
\centering
\includegraphics[width=0.9\linewidth]{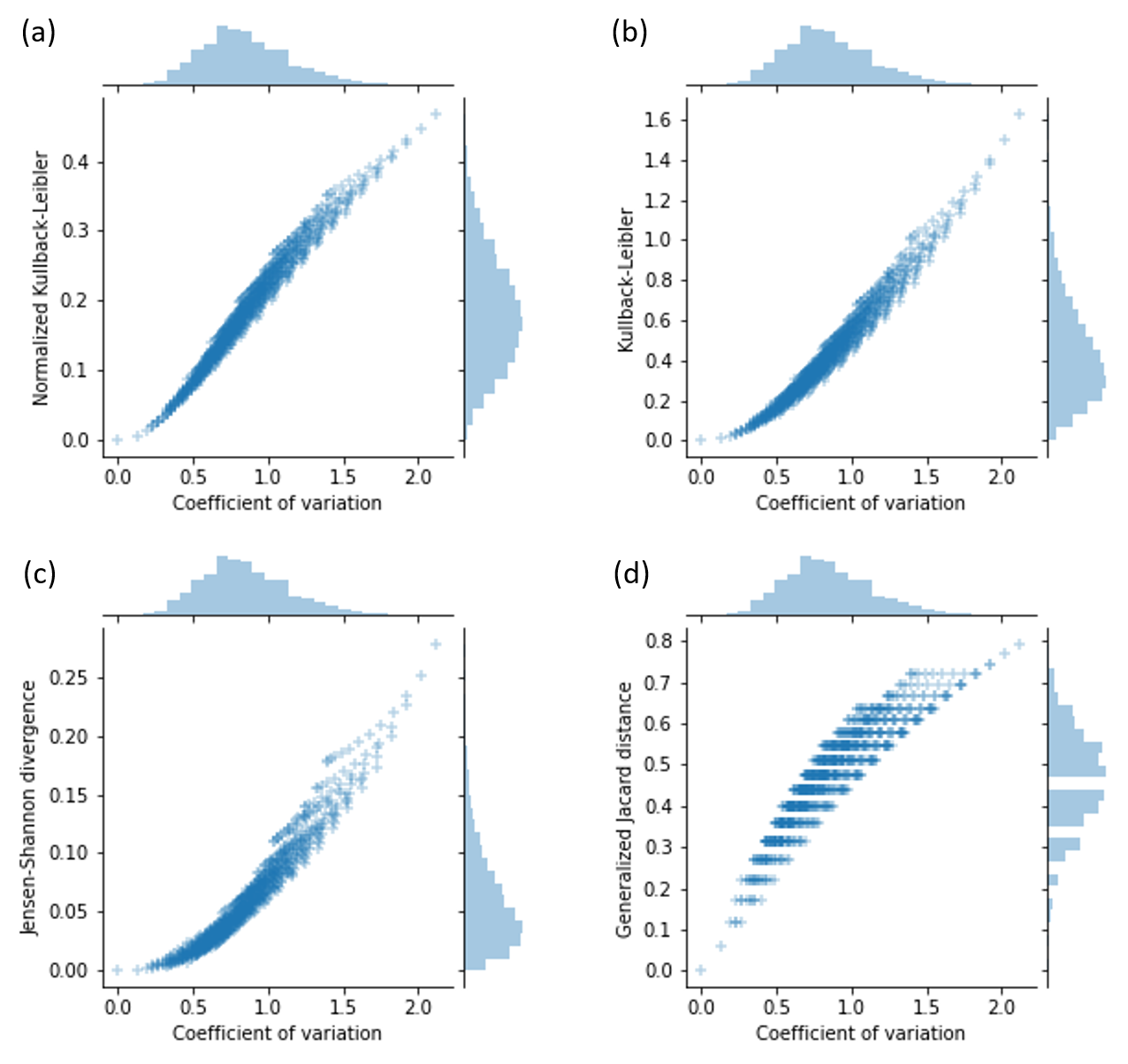}
\caption{Scatter plots generated by putting in relation four of the investigated measures and the coefficient of variation of the set of monotonically ordered distributions, generated with 8 cells and 32 dots, and the corresponding uniform distribution.}
\label{fig:exp3}
\end{figure}

Entropy and coefficient of variation are the distributional properties that better correlate with the investigated measures. In Figures 1 and 2 of Supplementary Material, it is shown that the skewness and the Kurtosis's index of the compared unordered distribution weakly correlate with the measures. However, both distributional properties form shapes similar to grids when they are plotted. This behaviour is possibly due to the discrete nature of the compared distributions. 

\section{Outcome spread diversity}
\label{sec:odiversity}
Scatter plots and histograms of the proposed figures shows interesting behaviours of the investigated measures related to how the output values of these measures spread along the output range.

For example, there are visible clusters that are formed by the generalized Jaccard distance (see Figure \ref{fig:exp1}).
This behaviour directly emerges from Equation \ref{eq:genjaccard} since the Jaccard distance tends to flatten the punctual comparison among the elements in the domain of the distribution into a sum of values of multiplicity.
The divergences seem to not produce such clusters, however, it can be helpful to investigate more properly such phenomenon.
Distances between consecutive values of the two measures has been taken into account. Given a set of $n$ comparisons, a vector of size $n$ is built from the values of the specific measure on such comparisons. 
The vector is sorted, then runs within the vector reporting the same value are substitute with one single value.
The differences between adjacent positions of the vector are extracted. Then, the mean and the standard deviation are computed. 
The elimination of the runs on the vector of the generalized Jaccard measure decreases the size of the vector from $1001\times 1001$ to 11, as it can be observed on the figure.
The distances of the generalized Jaccard measure have a mean equal to 0.08 and a standard deviation of 0.02.
On the contrary, the distances of the normalized entropic divergence have an average of 0.00004 and a standard deviation of 0.0005. Thus, it seems that the divergence is not forming clusters.
\\

Regarding experiments presented in Section \ref{sec:reldistrprop}, the compared measures have different output ranges. 
The unnormalized entropic divergence and the Jensen-Shannon divergence have no upper-bound by definition, on the contrary, it is expected that the proposed measure and the generalized Jaccard distance should range between 0 and 1.
The proposed normalized divergence ranges from 0 to circa 0.5 because one of the two compared distribution is always the uniform distribution. In fact, the monotonically ordered distribution that more diverges from the uniform distribution is the one which assigns all the available quantity to the first cell. Such a distribution is completely opposed to $U$ and the uniform distribution is exactly in the middle of them. Thus the divergence from the distribution to the uniform one is half of the divergence from $U$.
The generalized Jaccard distance is influenced by the fact that values closed to 1 can not be reached because the compared distributions have no term equal to 0. In fact, the maximum observable distance is 0.8.

Table \ref{tab:valreanges} shows the maximum value that each investigated measure reaches at varying the number of cells and dots with which distributions are built from. All the measures have a minimum value of 0 because the uniform distribution is among the distributions that are compared to itself.
The proposed normalized divergence takes values that are closed to 0.5 but never equal to such a value. The reason resides in the discretized nature of the compared distributions. However, some pattern emerges from the table. The values of the measures are directly related to the number of dots that are distributed. The smaller is the number of dots, the higher is the value of the proposed normalized measure. This behaviour is opposite to one of the other three measures which increase their value on increasing the number of distributed dots.
Intuitively, the distribution which maximizes the divergence/distance from the uniform distribution is the one which assigns all the available dots to the first cell, thus it is specular to $U$. This intuition is also confirmed by computational experiments.
The fact that the measure takes different values depends on the ratio between the dots that are assigned to the first cell and the number of cells. For example, the uniform distributions obtained for 6 cells and 12 dots and for 7 cells and 14 dots are almost the same. Both of them assign 2 dots to each cell. However, the number of available dots, after assigning one dots to each cell, is 6 in the first case and 7 in the second case. Thus the difference between the two generalized Jaccard distance is $\frac{2}{7}$ versus $\frac{2}{8}+\frac{1}{2} = \frac{3}{4}$ because, except for the first cell, all the other cell carry a value of $\frac{1}{2}$  for both configurations, and the configuration with 7 cells has an additional cell
This difference, notably, leads to a different resulting value. Similar considerations can be made for the other measures.

\begin{table}
\scriptsize
    \centering
    \begin{tabular}{l|l|c|c|c|c|c}
\hline
Cells & Dots & Norm. KL & Unnorm. KL & Jensen-Shannon & Hellinger & Gen. Jaccard\\
\hline
6	&	12	&	0.5078	&	0.6376	&	0.1395	&	0.0989	&	0.5882\\
6	&	18	&	0.4498	&	1.0876	&	0.2399	&	0.1719	&	0.7143\\
6	&	24	&	0.4297	&	1.3629	&	0.3046	&	0.2201	&	0.7692\\
6	&	30	&	0.4164	&	1.5480	&	0.3500	&	0.2546	&	0.8000\\
7	&	14	&	0.5151	&	0.7143	&	0.1518	&	0.1082	&	0.6000\\
7	&	21	&	0.4687	&	1.2057	&	0.2578	&	0.1857	&	0.7273\\
7	&	28	&	0.4502	&	1.5038	&	0.3257	&	0.2364	&	0.7826\\
7	&	35	&	0.4374	&	1.7033	&	0.3731	&	0.2726	&	0.8136\\
8	&	16	&	0.5233	&	0.7831	&	0.1622	&	0.1161	&	0.6087\\
8	&	24	&	0.4845	&	1.3103	&	0.2727	&	0.1973	&	0.7368\\
8	&	32	&	0.4672	&	1.6280	&	0.3429	&	0.2500	&	0.7925\\
8	&	40	&	0.4546	&	1.8397	&	0.3919	&	0.2876	&	0.8235\\
9	&	18	&	0.5315	&	0.8455	&	0.1711	&	0.1230	&	0.6154\\
9	&	27	&	0.4981	&	1.4043	&	0.2851	&	0.2072	&	0.7442\\
9	&	36	&	0.4815	&	1.7391	&	0.3573	&	0.2616	&	0.8000\\
9	&	45	&	0.4691	&	1.9614	&	0.4076	&	0.3002	&	0.8312\\
10	&	20	&	0.5394	&	0.9027	&	0.1789	&	0.1291	&	0.6207\\
10	&	30	&	0.5098	&	1.4897	&	0.2958	&	0.2158	&	0.7500\\
10	&	40	&	0.4938	&	1.8395	&	0.3696	&	0.2716	&	0.8060\\
10	&	50	&	0.4815	&	2.0713	&	0.4208	&	0.3112	&	0.8372\\
\hline
    \end{tabular}
    \caption{Maximum values of the five investigated measure by varying number of cells and dots which distributions are formed by.}
    \label{tab:valreanges}
\end{table}

The difference in how the measures spread the values along with the range from 0 to the maximum value is summarized in Table \ref{tab:meanmax}. 
Each experiment regards a specific number of cells and dots, as for the previous analysis. As a measure of spread, the average value divided by the maximum value is used. The closest to 0.5 is the resultant measurement, the more spread the values are. On the contrary, if the measurement tends to 0, then the values are more concentrated towards the 0, and, similarly, they are concentrated towards the maximum if the measurement tends to 1. 
The proposed normalized divergence is the one which better tends to 0.5 with an average value of 0.4296 along with the complete set of experiments.
The unnormalized KL tends to 0 more than the Jensen-Shannon divergence, that is in contrast with the mode observed in the figures, and the generalized Jaccard distance tends more to the maximum value with an average of 0.6.

\begin{table}
\scriptsize
    \centering
    \begin{tabular}{l|l|c|c|c|c|c}
\hline
Cells & Dots & Norm. KL & Unnorm. KL & Jensen-Shannon & Hellinger & Gen. Jaccard\\
\hline
6	&	12	&	0.5301	&	0.4089	&	0.4418	&	0.4379	&	0.6203\\
6	&	18	&	0.4403	&	0.3217	&	0.3540	&	0.3495	&	0.5979\\
6	&	24	&	0.3987	&	0.2865	&	0.3159	&	0.3110	&	0.5848\\
6	&	30	&	0.3739	&	0.2672	&	0.2941	&	0.2886	&	0.5766\\
7	&	14	&	0.5277	&	0.3987	&	0.4365	&	0.4315	&	0.6277\\
7	&	21	&	0.4396	&	0.3139	&	0.3523	&	0.3466	&	0.6069\\
7	&	28	&	0.3965	&	0.2778	&	0.3131	&	0.3071	&	0.5910\\
7	&	35	&	0.3709	&	0.2583	&	0.2910	&	0.2846	&	0.5815\\
8	&	16	&	0.5318	&	0.3931	&	0.4379	&	0.4314	&	0.6483\\
8	&	24	&	0.4390	&	0.3066	&	0.3505	&	0.3436	&	0.6144\\
8	&	32	&	0.3956	&	0.2711	&	0.3117	&	0.3046	&	0.5967\\
8	&	40	&	0.3694	&	0.2514	&	0.2891	&	0.2818	&	0.5860\\
9	&	18	&	0.5332	&	0.3871	&	0.4369	&	0.4292	&	0.6578\\
9	&	27	&	0.4404	&	0.3017	&	0.3508	&	0.3427	&	0.6218\\
9	&	36	&	0.3961	&	0.2660	&	0.3114	&	0.3033	&	0.6022\\
9	&	45	&	0.3692	&	0.2462	&	0.2885	&	0.2803	&	0.5906\\
10	&	20	&	0.5362	&	0.3832	&	0.4384	&	0.4294	&	0.6708\\
10	&	30	&	0.4421	&	0.2977	&	0.3514	&	0.3423	&	0.6284\\
10	&	40	&	0.3972	&	0.2621	&	0.3119	&	0.3029	&	0.6076\\
10	&	50	&	0.3696	&	0.2422	&	0.2886	&	0.2796	&	0.5951\\
\hline
avg	&		&	0.4349	&	0.3071	&	0.3483	&	0.3414	&	0.6103\\
\hline
    \end{tabular}
    \caption{Average divided by maximum value of the five investigated measures by varying number of cells and dots with which distributions are formed by.}
    \label{tab:meanmax}
\end{table}

\section{Differences in ranking outcomes}
\label{sec:diffrank}

Lastly, the difference in the ranking produced by the four measures has been investigated.
Experimental results were obtained by using 8 cells and 32 dots. The uniform distribution was compared to the set of monotonically decreasing ordered distributions, as for the previous experiment.
Then, distributions were ranked depending on the value each measure assigned to them.
Figure \ref{fig:exp6} show the comparison between the normalized entropic divergence and the three other measures in assigning the rank to the distributions. 
Each point, in one of the three plots, is a given distribution which coordinates, in the Cartesian plane, are given by the rank assigned by the two compared measures.
These charts give an idea of how different a ranking can be when different measures are applied. 
A mathematical way for comparing rankings is the Spearman's rank correlation coefficient \cite{daniel1978applied}, which values are reported in Table 3 of Supplementary Material.
The reported correlations may appear significantly high, however, there is a discordance between the measures from circa 0.05 to 0.001, which means that from 5\%  to 0.1\% of the elements are ranked differently. Such a difference may, for example, lead to different empirical p-values, which may change the results of a study.

\begin{figure}[h]
\centering
\includegraphics[width=0.9\linewidth]{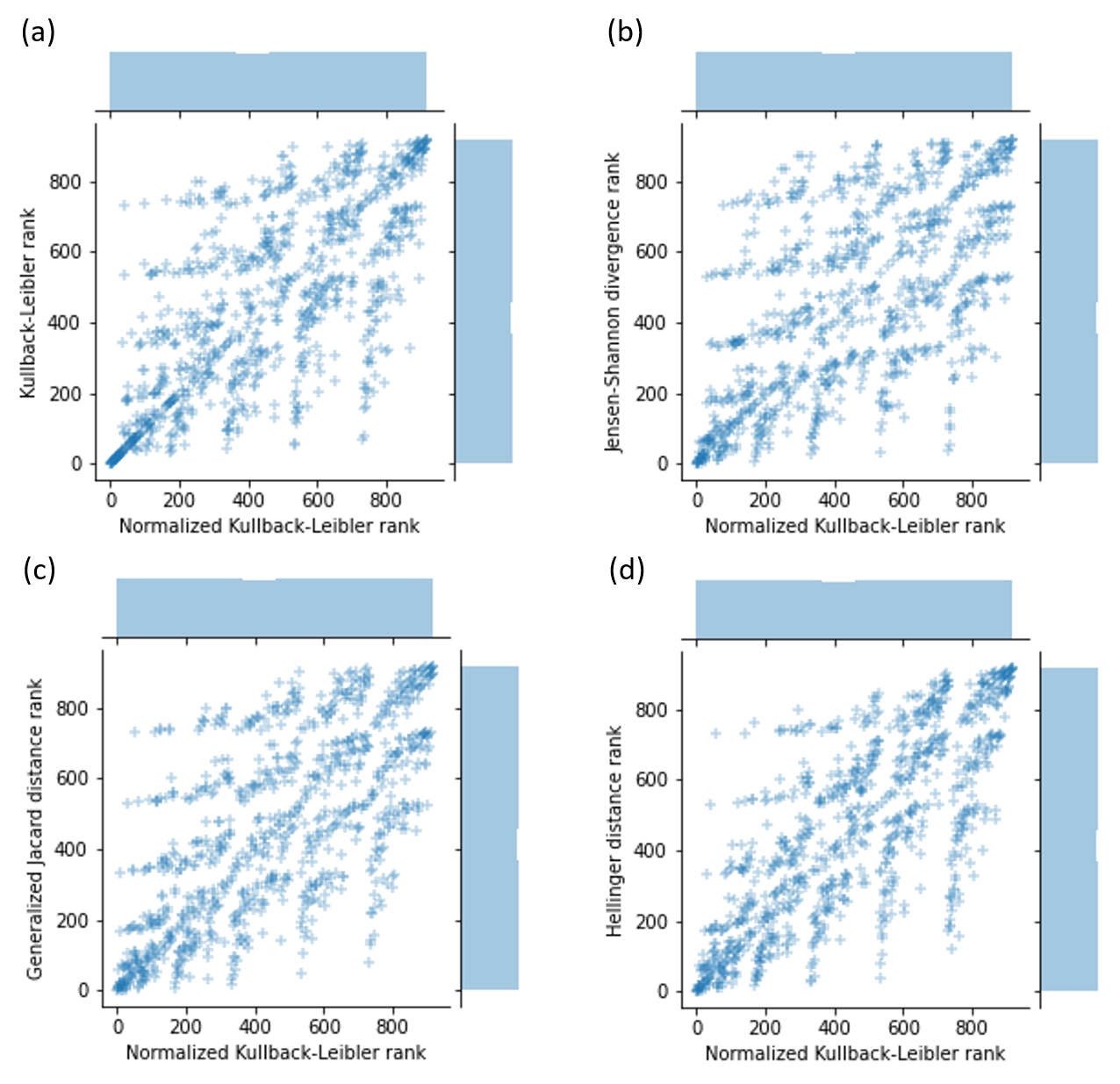}
\caption{Scatter plots obtained by taking into account the rank assigned by the proposed normalized Kullback-Leibler and the other investigated measures. The complete set of monotonically ordered distributions generated with 8 cells and 32 dots was used for extracting the rankings.}
\label{fig:exp6}
\end{figure}

\section{Conclusion}
This study shows that given a probability distribution $P$, there exists another distribution $U$ that maximizes the entropic divergence form $P$, if infinite divergences are avoided. 
$P$ and $U$ must have been generated by distributing a given discrete quantity.
If we think about quantum theory, the real word is made of discretized quantities called quanta (singular quantum). 
Thus, quantum probability distributions are in their essence multiplicity distributions. This consideration implies that the applicability of the findings presented in the current study can be of a wide range of applications.

The shape of the distribution $U$ is here characterized, and it is used for providing a notion of entropic divergence that is normalized between 0 and 1.
Empirical evaluation of such a normalized divergence w.r.t. other commonly used measures are reported. 
The evaluation shows that the proposed divergence has its specific behaviour, on varying the properties of the compared distributions, that differ from already known measures.

This study highlights an important aspect regarding the entropic divergence. An upper-bound to the divergence is obtainable only if the two compared distributions are formed by the same quantum. Future developments of divergence should take into account this aspect.


\begin{thebibliography}{10}

\bibitem{arizono1989test}
Ikuo Arizono and Hiroshi Ohta.
\newblock {A test for normality based on Kullback—Leibler information}.
\newblock {\em The American Statistician}, 43(1):20--22, 1989.

\bibitem{belov2010automatic}
Dmitry~I Belov and Ronald~D Armstrong.
\newblock {Automatic detection of answer copying via Kullback-Leibler
  divergence and K-index}.
\newblock {\em Applied Psychological Measurement}, 34(6):379--392, 2010.

\bibitem{bremaud2012introduction}
Pierre Br{\'e}maud.
\newblock {\em An introduction to probabilistic modeling}.
\newblock Springer Science \& Business Media, 2012.

\bibitem{clarke1999asymptotic}
Bertrand~S. Clarke.
\newblock Asymptotic normality of the posterior in relative entropy.
\newblock {\em IEEE Transactions on Information Theory}, 45(1):165--176, 1999.

\bibitem{cover1991elements}
Thomas~M Cover and Joy~A Thomas.
\newblock {Elements of Information Theory John Wiley \& Sons}.
\newblock {\em New York}, 68:69--73, 1991.

\bibitem{daniel1978applied}
Wayne~W Daniel et~al.
\newblock {\em Applied nonparametric statistics}.
\newblock Houghton Mifflin, 1978.

\bibitem{feller2008introduction}
Willliam Feller.
\newblock {\em An introduction to probability theory and its applications, vol
  2}.
\newblock John Wiley \& Sons, 2008.

\bibitem{hellinger1909neue}
Ernst Hellinger.
\newblock Neue begr{\"u}ndung der theorie quadratischer formen von
  unendlichvielen ver{\"a}nderlichen.
\newblock {\em Journal f{\"u}r die reine und angewandte Mathematik (Crelles
  Journal)}, 1909(136):210--271, 1909.

\bibitem{kullback1951information}
Solomon Kullback and Richard~A Leibler.
\newblock On information and sufficiency.
\newblock {\em The annals of mathematical statistics}, 22(1):79--86, 1951.

\bibitem{lee1988thirteen}
Joseph Lee~Rodgers and W~Alan Nicewander.
\newblock Thirteen ways to look at the correlation coefficient.
\newblock {\em The American Statistician}, 42(1):59--66, 1988.

\bibitem{li2008testing}
Yulin Li and Liuxia Wang.
\newblock Testing for homogeneity in mixture using weighted relative entropy.
\newblock {\em Communications in Statistics—Simulation and
  Computation{\textregistered}}, 37(10):1981--1995, 2008.

\bibitem{lin1991divergence}
Jianhua Lin.
\newblock Divergence measures based on the shannon entropy.
\newblock {\em IEEE Transactions on Information theory}, 37(1):145--151, 1991.

\bibitem{lin2007information}
Xiaodong Lin, Jennifer Pittman, and Bertrand Clarke.
\newblock Information conversion, effective samples, and parameter size.
\newblock {\em IEEE transactions on information theory}, 53(12):4438--4456,
  2007.

\bibitem{manca2013infobiotics}
Vincenzo Manca.
\newblock {\em Infobiotics}.
\newblock Springer, 2013.

\bibitem{pemmaraju2003computational}
Sriram Pemmaraju and Steven Skiena.
\newblock {\em {Computational Discrete Mathematics: Combinatorics and Graph
  Theory with Mathematica{\textregistered}}}.
\newblock Cambridge university press, 2003.

\bibitem{pinello2011motif}
Luca Pinello, Giosu{\`e}~Lo Bosco, Bret Hanlon, and Guo-Cheng Yuan.
\newblock {A motif-independent metric for DNA sequence specificity}.
\newblock {\em BMC bioinformatics}, 12(1):408, 2011.

\bibitem{renyi1961measures}
Alfr{\'e}d R{\'e}nyi et~al.
\newblock On measures of entropy and information.
\newblock In {\em Proceedings of the Fourth Berkeley Symposium on Mathematical
  Statistics and Probability, Volume 1: Contributions to the Theory of
  Statistics}. The Regents of the University of California, 1961.

\bibitem{sason2016f}
Igal Sason and Sergio Verd{\'u}.
\newblock $ f $-divergence inequalities.
\newblock {\em IEEE Transactions on Information Theory}, 62(11):5973--6006,
  2016.

\bibitem{stanley2011enumerative}
Richard~P Stanley.
\newblock {Enumerative Combinatorics Volume 1 second edition}.
\newblock {\em Cambridge studies in advanced mathematics}, 2011.

\bibitem{volkau2006extraction}
Ihar Volkau, KN~Bhanu Prakash, Anand Ananthasubramaniam, Aamer Aziz, and
  Wieslaw~L Nowinski.
\newblock {Extraction of the midsagittal plane from morphological neuroimages
  using the Kullback--Leibler’s measure}.
\newblock {\em Medical Image Analysis}, 10(6):863--874, 2006.

\bibitem{zambelli2018rnentropy}
Federico Zambelli, Francesca Mastropasqua, Ernesto Picardi, Anna~Maria
  D’Erchia, Graziano Pesole, and Giulio Pavesi.
\newblock {RNentropy: an entropy-based tool for the detection of significant
  variation of gene expression across multiple RNA-Seq experiments}.
\newblock {\em Nucleic acids research}, 46(8):e46--e46, 2018.

\end{thebibliography}

\end{document}